\def\eqref#1{equation~\ref{#1}}
\def\1{\bm{1}}
\DeclareMathAlphabet{\mathsfit}{\encodingdefault}{\sfdefault}{m}{sl}
\SetMathAlphabet{\mathsfit}{bold}{\encodingdefault}{\sfdefault}{bx}{n}
\newtheorem{theorem}{Theorem}
\newtheorem{lemma}{Lemma}
\definecolor{myBlue}{RGB}{55, 126, 184}
\definecolor{myGreen}{RGB}{77, 175, 74}
\author{
Alexander Mathiasen\thanks{Aarhus University, \{alexander.mathiasen, fhvilshoj\}@gmail.com} 
\And
Frederik Hvilsh\o j\footnotemark[1]
}
\title{One Reflection Suffice}
\begin{document}

\maketitle

\begin{abstract}
Orthogonal weight matrices are used in many areas of deep learning. 
Much previous work attempt to alleviate the additional computational resources it requires to constrain weight matrices to be orthogonal. 
One popular approach utilizes \emph{many} Householder reflections. 
The only practical drawback is that many reflections cause low GPU utilization. 
We mitigate this final drawback by proving that \emph{one} reflection is sufficient, if the reflection is computed by an auxiliary neural network.

\end{abstract}

\section{Introduction}
Orthogonal matrices have shown several benefits in deep learning, with successful applications in Recurrent Neural Networks, Convolutional Neural Networks and Normalizing Flows. 
One popular approach can represent any $d\times d$ orthogonal matrix using $d$ Householder reflections \citep{hhrnn}. 
The only practical drawback is low GPU utilization, which happens because the $d$ reflections needs to be evaluated sequentially \citep{fasth}. 
Previous work often increases GPU utilization by using $k\ll d$ reflections \citep{hhnf,hhrnn,svdnn,sylvesternf}. 
Using fewer reflections limits the orthogonal transformations the reflections can represent, yielding a trade-off between representational power and computation time. 
This raises an intriguing question: can we circumvent the trade-off and attain full representational power without sacrificing computation time? 

We answer this question with a surprising ``yes.''
The key idea is to use an auxiliary neural network to compute a different reflection for each input. 
In theory, we prove that \emph{one} such ``auxiliary reflection'' can represent any number of normal reflections. 
In practice, we demonstrate that one auxiliary reflection attains similar validation error to models with $d$ normal reflections, when training Fully Connected Neural Networks (\Cref{fig:time} left), 
Recurrent Neural Networks (\Cref{fig:time} center) and
convolutions in Normalizing Flows (\Cref{fig:time} right). 
Notably, auxiliary reflections train between $2$ and $6$ times faster for Fully Connected Neural Networks with orthogonal weight matrices (see \Cref{sec:exp}). 


\begin{figure}[h]
    \centering
    \includegraphics[width=1\textwidth]{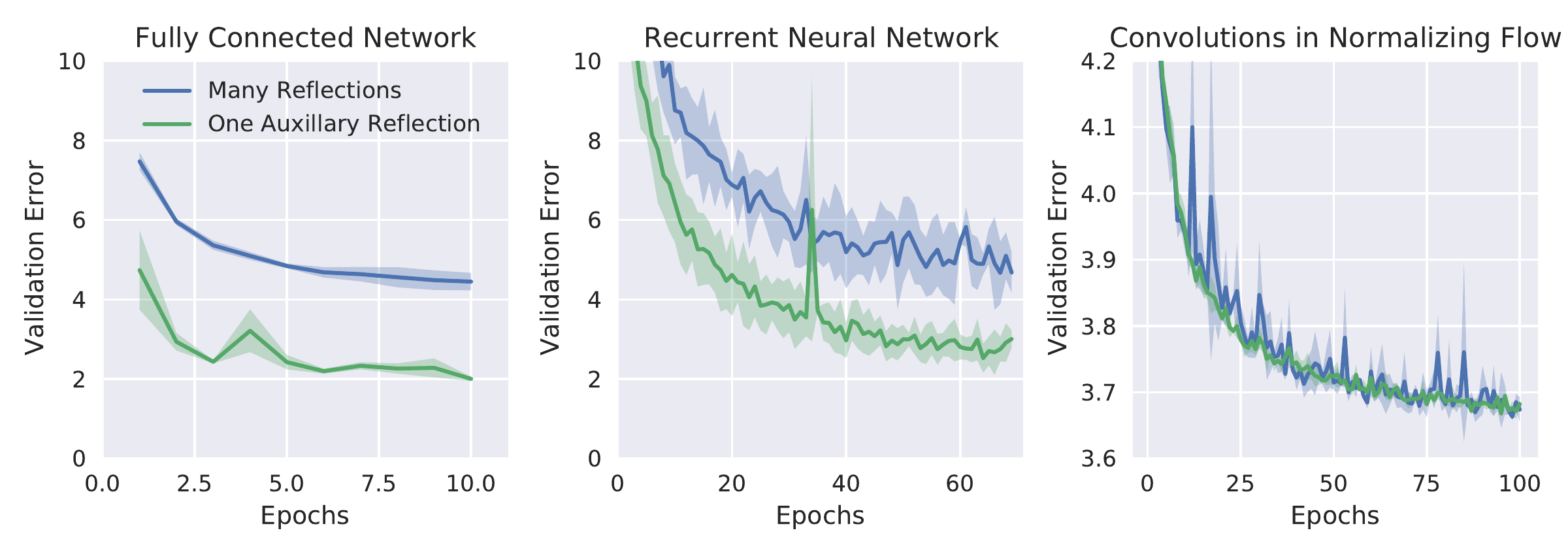}
    \caption{Models with one auxiliary reflection attains similar validation error to models with many reflections. 
    Lower error means better performance. 
    See \Cref{sec:exp} for details. }
    \label{fig:time}
\end{figure}

\subsection{Our Results}\label{sec:hypo}
The Householder reflection of $x\in\mathbb{R}^d$ around $v\in \mathbb{R}^d$ can be represented by a matrix $H(v)\in \mathbb{R}^{d\times d}$. 
$$H(v)x = \left(I-2\frac{vv^T}{||v||^2}\right)x.$$
An auxiliary reflection uses a Householder matrix $H(v)$ with $v=n(x)$ for a neural network $n$. 
$$f(x)=H(n(x))x=\left(I-2\frac{n(x)n(x)^T}{||n(x)||^2}\right)x. $$
One auxiliary reflection can represent any composition of Householder reflections. 
We prove this claim even when we restrict the neural network $n(x)$ to have a single linear layer $n(x)=Wx$ for $W\in\mathbb{R}^{d\times d}$ such that $f(x)=H(Wx)x$. 
\begin{restatable}{theorem}{thmorth}
\label{thm:orth}
For any $k$ Householder reflections $U=H(v_1)\cdots H(v_k)$ there exists a neural network $n(x)=Wx$ with $W\in \mathbb{R}^{d\times d}$ such that 
$f(x)=H(Wx)x=Ux$ for all $x\in \mathbb{R}^d\backslash\{ 0\}$. 
\end{restatable}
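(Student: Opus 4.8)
The plan is to exhibit an explicit $W$ rather than argue abstractly, and the natural candidate is $W = I - U$. With this choice the statement reduces to a single pointwise identity: for every fixed $x \neq 0$, writing $v = Wx = (I-U)x$ and $y = Ux$, I must verify that $H(v)x = y$. The only structural fact about $U$ the argument will use is that it is orthogonal, being a product of the orthogonal matrices $H(v_1),\dots,H(v_k)$; this gives the norm-matching $\|y\| = \|Ux\| = \|x\|$, which is exactly what makes a single reflection able to carry $x$ to $y$.

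The heart of the proof is the elementary observation that the Householder reflection across the direction $x - y$ swaps any two vectors of equal norm. Concretely, with $v = x - y$ I would expand
$$\frac{v^T x}{\|v\|^2} = \frac{\|x\|^2 - x^T y}{\|x\|^2 - 2x^T y + \|y\|^2},$$
then substitute $\|y\| = \|x\|$ to collapse the denominator to $2(\|x\|^2 - x^T y)$, concluding that the scalar equals exactly $\tfrac12$ whenever $x^T y \neq \|x\|^2$. Plugging back, $H(v)x = x - 2\cdot\tfrac12\,v = x - (x - y) = y = Ux$, which is the desired equality.

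The main obstacle is the degenerate set where this computation stalls, namely the fixed points of $U$. By Cauchy--Schwarz, $x^T y = \|x\|^2$ together with $\|y\|=\|x\|$ forces $y = x$, i.e. $Ux = x$, and this is precisely where $v = Wx = 0$ and $H(Wx)$ is ill-defined. I would dispose of this case separately: on $\ker(I-U)\setminus\{0\}$ one has $Ux = x$, so adopting the convention $H(0) := I$ immediately yields $H(Wx)x = x = Ux$. Checking that this convention is harmless --- and flagging it explicitly, since the theorem ranges over all $x \neq 0$ and therefore includes such fixed points --- is the one place the argument needs care; everywhere off the fixed space the clean $\tfrac12$ computation applies verbatim and no convention is required.
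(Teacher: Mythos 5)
Your proposal is correct and follows essentially the same route as the paper: the paper also takes $W=I-U$ and reduces the theorem to the elementary lemma that $H(x-y)x=y$ whenever $\|x\|=\|y\|$, proved there by direct expansion of $H(x-y)x$, which is equivalent to your observation that the scalar $v^Tx/\|v\|^2$ collapses to $\tfrac12$. If anything your write-up is more careful than the paper's: the paper silently passes over the degenerate case $Ux=x$ (where $Wx=0$ and $H(Wx)$ involves division by zero, which does occur, e.g.\ on the orthogonal complement of $\mathrm{span}(v_1,\dots,v_k)$), whereas you flag it explicitly and dispose of it with the convention $H(0):=I$.
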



Previous work \citep{hhrnn,svdnn} often employ $k\ll d$ reflections and compute $Ux$ as $k$ sequential Householder reflections $H(v_1)\cdots H(v_k)\cdot x$ with weights $V=(v_1\; \cdots \;v_k)$. 
It is the evaluation of these sequential Householder reflection that cause low GPU utilization \citep{fasth}, so lower values of $k$ increase GPU utilization but decrease representational power. 
\Cref{thm:orth} states that it is sufficient to evaluate a single auxiliary reflection $H(Wx)x$ instead of $k$ reflections $H(v_1)\cdots H(v_k)\cdot x$, thereby gaining high GPU utilization while retaining the full representational power of any number of reflections.


In \emph{practice}, we demonstrate that $d$ reflections can be substituted with a single auxiliary reflection without decreasing validation error, when training Fully Connected Neural Networks (\Cref{exp:fcn}), Recurrent Neural Networks (\Cref{exp:rnn}) and Normalizing Flows (\Cref{exp:nf}). 
While the use of auxiliary reflections is straightforward for Fully Connected Neural Networks and Recurrent Neural Networks, we needed additional ideas to support auxiliary reflections in Normalizing Flows. 
In particular, we developed further theory concerning the inverse and Jacobian of $f(x)=H(Wx)x$. 
 Note that $f$ is invertible if there exists a unique $x$ given $y=H(Wx)x$ and $W$.

\begin{restatable}{theorem}{tinv}\label{thm:tinv}
Let $f(x)=H(Wx)x$ with $f(0):=0$, then $f$ is invertible on $\mathbb{R}^d$ with $d\ge 2$ if $W=W^T$ and has eigenvalues which satisfy $3/2\cdot \lambda_{\min}(W) > \lambda_{\max}(W)$. 
\end{restatable}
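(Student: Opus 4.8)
The plan is to reduce to a diagonal $W$ and then convert the vector equation $f(x)=y$ into a one-dimensional root-finding problem governed by the scalar quantity $g(x)=x^TWx/\|Wx\|^2$, a generalized Rayleigh quotient. First I would diagonalize: since $W=W^T$, write $W=Q\Lambda Q^T$ with $Q$ orthogonal and $\Lambda=\mathrm{diag}(\lambda_1,\dots,\lambda_d)$. A direct computation shows $f(Qz)=Qf_\Lambda(z)$, where $f_\Lambda$ is the same map built from $\Lambda$, so $f=Q\circ f_\Lambda\circ Q^{-1}$ is invertible iff $f_\Lambda$ is, and I may assume $W=\Lambda$. The hypothesis $\tfrac32\lambda_{\min}>\lambda_{\max}\ge\lambda_{\min}$ forces $\lambda_{\min}>0$, so $\Lambda$ is positive definite, $Wx\neq0$ for $x\neq0$, and $f$ is well defined away from the origin; since each $H(\cdot)$ is orthogonal, $\|f(x)\|=\|x\|$, which handles the origin ($f(x)=0\iff x=0$).

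In the eigenbasis the map is coordinatewise, $f(x)_i=x_i\bigl(1-2\lambda_i g(x)\bigr)$ with $g(x)=\sum_i\lambda_i x_i^2/\sum_i\lambda_i^2 x_i^2$, and writing $g$ as a convex combination of the $1/\lambda_i$ shows $g(x)\in[1/\lambda_{\max},1/\lambda_{\min}]$. Fixing $y\neq0$ and setting $s=g(x)$ for a hypothetical solution of $f(x)=y$, the bound $s\ge1/\lambda_{\max}>1/(2\lambda_{\min})\ge1/(2\lambda_i)$ — here I use $\lambda_{\max}<2\lambda_{\min}$, implied by the hypothesis — makes every factor $1-2\lambda_i s$ nonzero, so $x_i=y_i/(1-2\lambda_i s)$. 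Substituting back into $s=g(x)$ and clearing denominators gives the single scalar equation
$$\Phi(s):=\sum_i \frac{\lambda_i y_i^2(\lambda_i s-1)}{(1-2\lambda_i s)^2}=0.$$
Thus every preimage of $y$ corresponds to a root $s\in[1/\lambda_{\max},1/\lambda_{\min}]$ of $\Phi$, and conversely a root yields a genuine preimage by reversing the algebra; it therefore suffices to prove $\Phi$ has exactly one root in this interval.

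The crux is strict monotonicity of $\Phi$, and this is exactly where the constant $3/2$ enters. Differentiating a single summand gives
$$\frac{d}{ds}\frac{\lambda_i s-1}{(1-2\lambda_i s)^2}=\frac{\lambda_i(2\lambda_i s-3)}{(1-2\lambda_i s)^3}.$$
On the interval $1-2\lambda_i s<0$, so the denominator is negative, and $2\lambda_i s\le 2\lambda_{\max}/\lambda_{\min}<3$ by the hypothesis, so the numerator is negative; hence each summand is strictly increasing and $\Phi$ is strictly increasing (the weights $\lambda_i y_i^2\ge0$ are not all zero since $y\neq0$). As $\Phi(1/\lambda_{\max})\le0\le\Phi(1/\lambda_{\min})$ by inspecting the sign of $\lambda_i s-1$ at the endpoints, $\Phi$ has a unique root, which determines a unique $x$; injectivity and surjectivity follow, so $f$ is a bijection.

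I expect the only delicate part to be the derivative computation together with the sign bookkeeping that pins the threshold precisely to $\lambda_{\max}<\tfrac32\lambda_{\min}$ (the condition $\lambda_{\max}<2\lambda_{\min}$ keeping the poles $1/(2\lambda_i)$ to the left of the interval, and the stronger condition forcing $2\lambda_i s-3<0$ for monotonicity). The degenerate case $\lambda_{\max}=\lambda_{\min}$, where $g$ is constant and $f=-I$, is trivial and handled separately.
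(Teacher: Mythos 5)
Your proof is correct, and it takes a genuinely different route from the paper's. The paper argues topologically: scale invariance of $H(Wx)$ reduces invertibility on $\mathbb{R}^d\setminus\{0\}$ to invertibility on $S^{d-1}$ (\Cref{lemma:sd}), Hadamard's global inverse theorem (\Cref{thm:hadamard}, which is where $d\ge 2$ and simple connectedness enter) reduces that to a non-vanishing Jacobian determinant (\Cref{thm:inv}), and the determinant is then controlled via the explicit Jacobian formula (\Cref{lemma:jac}), the matrix determinant lemma (\Cref{lemma:jacdet}), and the eigenvalue bound $\lambda_i(A^{-1})<-1/2$ (\Cref{lemma:eig}). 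You bypass all of this machinery: after diagonalizing $W=Q\Lambda Q^T$ the map acts coordinatewise as $f(x)_i=x_i(1-2\lambda_i g(x))$, and solving $f(x)=y$ collapses to locating the scalar $s=g(x)$ as a root of the strictly monotone function $\Phi$ on $[1/\lambda_{\max},1/\lambda_{\min}]$; existence and uniqueness of that root give bijectivity outright. Your key computations check out — the derivative $\lambda_i(2\lambda_i s-3)/(1-2\lambda_i s)^3$, the endpoint signs $\Phi(1/\lambda_{\max})\le 0\le\Phi(1/\lambda_{\min})$, and the reversibility of the substitution $x_i=y_i/(1-2\lambda_i s)$ — and it is worth noting that your conditions $1<2\lambda_i s<3$ are precisely the paper's $\gamma_i\in(1/2,3/2)$ in \Cref{lemma:eig}, so both proofs ultimately rest on the same Rayleigh-quotient fact, just packaged differently. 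What your route buys: it is elementary (no differential topology, no Jacobian or determinant computation), it is constructive — inversion becomes one-dimensional monotone root finding, a simpler and arguably more robust alternative to the $d$-dimensional Newton iteration the paper uses in \Cref{app:inv} — and it does not actually need $d\ge 2$. What it gives up: diagonalization commits to $W=W^T$ from the first line, whereas the paper's \Cref{thm:inv} holds for any invertible $W$ with non-vanishing Jacobian determinant, and the paper's closing remark exploits exactly that extra generality (any other constraint forcing its determinant expression to be non-zero would suffice); moreover the Jacobian formula of \Cref{lemma:jac} is needed anyway for the normalizing-flow likelihood, so in the paper's context that work is not wasted.
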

Finally, we present a matrix formula for the Jacobian of the auxiliary reflection $f(x)=H(Wx)x$.
This matrix formula is used in our proof of \Cref{thm:tinv}, but it also allows us simplify the Jacobian determinant (\Cref{lemma:jacdet}) which is needed when training Normalizing Flows. 

\begin{restatable}{theorem}{lemmajac}\label{lemma:jac}
The Jacobian of $f(x)=H(Wx)x$ is: 
$$J=H(Wx)A-2\frac{Wxx^TW}{||Wx||^2}\quad\text{where}\quad A=I-2\frac{x^TW^Tx}{||Wx||^2}W.$$
\end{restatable}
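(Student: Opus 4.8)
The plan is to differentiate $f$ directly and then verify that the result matches the stated factorization. First I would write $f$ in explicit form by letting the Householder matrix act on $x$: since $(Wx)(Wx)^Tx=(x^TW^Tx)\,Wx$ is a scalar multiple of $Wx$, we have
\[
f(x)=x-2\,\frac{x^TW^Tx}{||Wx||^2}\,Wx = x-2\alpha(x)\,Wx,\qquad \alpha(x):=\frac{x^TW^Tx}{||Wx||^2}.
\]
This reduces the task to differentiating a scalar function $\alpha$ times the linear map $Wx$, which is much cleaner than differentiating the matrix $H(Wx)$ entrywise.

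Next I would apply the product rule. Writing $J$ for the Jacobian of $f$, differentiating $\alpha(x)Wx$ produces a matrix term from $Wx$ and a rank-one term from the scalar $\alpha$:
\[
J=I-2\alpha(x)\,W-2\,Wx\,(\nabla\alpha(x))^T.
\]
The only genuine computation is $\nabla\alpha$, obtained from the quotient rule together with the two standard quadratic-form gradients $\nabla(x^TW^Tx)=(W+W^T)x$ and $\nabla||Wx||^2=2W^TWx$, giving
\[
\nabla\alpha(x)=\frac{(W+W^T)x}{||Wx||^2}-2\,\frac{x^TW^Tx}{||Wx||^4}\,W^TWx.
\]

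Substituting $\nabla\alpha$ back and transposing expresses $J$ as a sum of five explicit terms: the identity, $-2\alpha W$, the two rank-one terms $-2\frac{Wxx^TW}{||Wx||^2}$ and $-2\frac{Wxx^TW^T}{||Wx||^2}$, and the cross term $+4\frac{x^TW^Tx}{||Wx||^4}Wxx^TW^TW$. The final step is to confirm this equals the claimed formula. Rather than trying to guess the factorization, I would expand the right-hand side $H(Wx)A-2\frac{Wxx^TW}{||Wx||^2}$ using $H(Wx)=I-2\frac{Wxx^TW^T}{||Wx||^2}$ and $A=I-2\frac{x^TW^Tx}{||Wx||^2}W$, and match term by term. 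The product $H(Wx)A$ supplies exactly the identity, the $-2\alpha W$ term, the $-2\frac{Wxx^TW^T}{||Wx||^2}$ term, and — crucially — the cross term, which arises precisely from multiplying the two correction factors of $H(Wx)$ and $A$; the leftover $-2\frac{Wxx^TW}{||Wx||^2}$ is provided by the trailing summand.

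The main obstacle I anticipate is bookkeeping rather than anything conceptual: one must track carefully where $W$ appears versus $W^T$ — the numerator gradient contributes both $W$ and $W^T$, whereas the denominator gradient contributes only $W^TW$ — and keep the scalar-versus-vector structure straight when forming the outer products. The one place a sign or transpose slip would go unnoticed is the cross term $4\frac{x^TW^Tx}{||Wx||^4}Wxx^TW^TW$, so I would verify explicitly that it emerges correctly from the product $H(Wx)A$ before declaring the two expressions equal.
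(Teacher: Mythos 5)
Your proposal is correct and takes essentially the same route as the paper: both reduce $f$ to $x - 2\alpha(x)\,Wx$ with $\alpha(x)=\frac{x^TW^Tx}{||Wx||^2}$, differentiate via the product/quotient rule using the same two gradients $\nabla(x^TW^Tx)=(W+W^T)x$ and $\nabla||Wx||^2=2W^TWx$, and arrive at the identical five-term expansion of $J$. The only cosmetic difference is that the paper computes entrywise and then \emph{factors} the expansion into $H(Wx)A-2\frac{Wxx^TW}{||Wx||^2}$, whereas you expand that target and match terms --- the same algebra read in the opposite direction.
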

We prove \Cref{thm:orth} in \Cref{app:lemma:ref} while \Cref{lemma:jac,thm:tinv} are proved in \Cref{sec:nf}. 
\section{Normalizing Flows}\label{sec:nf}
\subsection{Background}
Let $z\sim N(0,1)^d$ and $f$ be an invertible neural network. Then $f^{-1}(z)\sim P_{model}$ defines a model distribution for which we can compute likelihood of $x\sim P_{data}$ \citep{nice}. 
\begin{equation}
\label{equ:nll}
     \log p_{model}(x)=\log p_z(f(x)) + \log \left| \det \left( \frac{\partial f(x)}{\partial x }\right)\right|
\end{equation}
This allows us to train invertible neural network as generative models by maximum likelihood. Previous work demonstrate how to construct invertible neural networks and efficiently compute the log jacobian determinant \citep{realnvp,glow,flowpp}. 

\subsection{Invertibility and Jacobian Determinant (Proof Sketch)}\label{subsec:jac}
To use auxiliary reflections in Normalizing Flows we need invertibility. 
That is, for every $y\in \mathbb{R}^d$ there must exist a unique $x\in\mathbb{R}^d$ so $f(x)=H(Wx)x=y$.\footnote{Note that we do not know $H(Wx)$ so we cannot trivially compute $x=H(Wx)^{-1}y=H(Wx)y$. } 
We find that $f$ is invertible if its Jacobian determinant is non-zero for all $x$ in $S^{d-1}=\{x \in \mathbb{R}^d \mid \|x\|=1\}$. 
\begin{restatable}{theorem}{thminv}
\label{thm:inv} 
Let $f(x)=H(Wx)x$ with $f(0):=0$, then $f$ is invertible on $\mathbb{R}^d$ with $d\ge 2$ if the Jacobian determinant of $f$ is non-zero for all $x\in S^{d-1}$ and $W$ is invertible. 
\end{restatable}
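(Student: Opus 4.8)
The plan is to reduce the claim to a statement about a self-map of the unit sphere and then invoke covering-space theory. Two structural facts drive everything. First, since $W$ is invertible we have $Wx\neq 0$ for $x\neq 0$, so $H(Wx)$ is orthogonal and $f$ is norm-preserving: $\|f(x)\|=\|H(Wx)x\|=\|x\|$. In particular $f$ is proper and maps each sphere of radius $r$ to itself. Second, $H(\lambda Wx)=H(Wx)$ for every scalar $\lambda\neq 0$, because the Householder matrix depends only on the direction of its defining vector; hence $f(\lambda x)=\lambda f(x)$, i.e.\ $f$ is homogeneous of degree one. Consequently $f$ is completely determined by its restriction $g:=f|_{S^{d-1}}\colon S^{d-1}\to S^{d-1}$, and $f$ is a bijection of $\mathbb{R}^d$ if and only if $g$ is a bijection of $S^{d-1}$ (the origin being the unique preimage of $0$ by norm preservation).

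Next I would show that $g$ is a local diffeomorphism of $S^{d-1}$. Differentiating $f(\lambda x)=\lambda f(x)$ shows the Jacobian $J(x)$ is homogeneous of degree zero, so the hypothesis $\det J\neq 0$ on $S^{d-1}$ gives $\det J\neq 0$ on all of $\mathbb{R}^d\setminus\{0\}$. Euler's identity for degree-one homogeneous maps gives $J(x)x=f(x)$, and the fact that $f$ preserves the sphere shows $J(x)$ sends $T_xS^{d-1}$ into $T_{f(x)}S^{d-1}$. With respect to the orthogonal splittings $\mathbb{R}^d=\mathbb{R}x\oplus T_xS^{d-1}$ and $\mathbb{R}^d=\mathbb{R}f(x)\oplus T_{f(x)}S^{d-1}$, the matrix $J(x)$ is therefore block diagonal with radial block equal to $1$, so $\det J(x)=\pm\det\big(dg_x\big)$ and $dg_x$ is nonsingular. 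Since $S^{d-1}$ is compact, $g$ is a proper local diffeomorphism onto the connected manifold $S^{d-1}$; its image is open and closed, hence all of $S^{d-1}$, so $g$ is a covering map.

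The remaining, and genuinely delicate, step is to show this covering has a single sheet, which is exactly injectivity of $g$. Because the sign of $\det\big(dg_x\big)$ is locally constant and $S^{d-1}$ is connected, the sheet count equals $|\deg g|$, so it suffices to prove $\deg g=\pm 1$. For $d\ge 3$ this is automatic: $S^{d-1}$ is simply connected, so every covering is trivial and $g$ is a diffeomorphism. For $d=2$ one must actually compute the degree; the natural route is a homotopy of $W$ inside $\mathrm{GL}(d,\mathbb{R})$ to the identity, where $f(x)=H(x)x=-x$ is the antipodal map of degree $(-1)^d$, forcing $|\deg g|=1$. I expect this degree/sheet-count step to be the crux: homotopy invariance of the degree only transports the computation within a connected component of $\mathrm{GL}(d,\mathbb{R})$, so for $d=2$ the conclusion really leans on being able to reach the antipodal map (e.g.\ when $\det W>0$), and mere invertibility of $W$ does not by itself bound the winding number of $g$. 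Once $g$ is shown bijective, homogeneity lifts the bijection to $\mathbb{R}^d\setminus\{0\}$ and $f(0)=0$ completes the argument.
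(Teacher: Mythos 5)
Your argument is, at bottom, the same as the paper's: the paper reduces invertibility on $\mathbb{R}^d$ to invertibility on $S^{d-1}$ via scale invariance of $H$ (its \Cref{lemma:sd}), and then invokes Hadamard's global inverse theorem (\Cref{thm:hadamard}), whose hypotheses -- proper, nonvanishing Jacobian, simply connected target -- are exactly what your covering-space argument unpacks. You are, however, more careful on two points, and both matter. First, you relate $\det J$ of $f$ on $\mathbb{R}^d$ to the Jacobian of the restricted map $g=f|_{S^{d-1}}$ via the radial/tangential block splitting (Euler's identity $J(x)x=f(x)$ plus tangency); the paper silently conflates these two Jacobians when it feeds the hypothesis ``$\det J\neq 0$ on $S^{d-1}$'' into a theorem about self-maps of the sphere. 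Second, and decisively, your refusal to wave through $d=2$ is vindicated: the paper's \Cref{cor} rests on the claim that $S^{d-1}$ is simply connected for all $d\ge 2$, which is false for $d=2$ since $\pi_1(S^1)=\mathbb{Z}$, and the theorem as stated actually fails there.

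Concretely, take $W=\mathrm{diag}(1,-1)$, which is invertible. Writing $x=(\cos\theta,\sin\theta)$, a short computation gives $f(x)=H(Wx)x=(-\cos 3\theta,\sin 3\theta)$, so $g$ is the map $\theta\mapsto \pi-3\theta$ of degree $-3$, and $f$ is $3$-to-$1$ on every circle; meanwhile its Jacobian determinant is identically $-3\neq 0$ (check directly, or via \Cref{lemma:jacdet}). So the ``crux'' you flagged cannot be closed for general invertible $W$: the hypotheses of \Cref{thm:inv} are genuinely insufficient when $d=2$, exactly as your winding-number concern predicts. Your homotopy observation also identifies where the result survives: if $\det W>0$, then $g$ is homotopic (through maps of this form, along a path in $\mathrm{GL}^+(d,\mathbb{R})$) to the antipodal map, so $|\deg g|=1$ and invertibility follows. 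That is precisely what rescues the paper's main result \Cref{thm:tinv}, since $W=W^T$ with $3/2\cdot\lambda_{\min}(W)>\lambda_{\max}(W)$ forces $W$ to be positive definite, hence $\det W>0$. In short: your proof is complete and correct for $d\ge 3$, and the step you could not finish for $d=2$ is not a gap in your argument but an error in the paper.
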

The Jacobian determinant of $H(Wx)x$ takes the following form. 
\begin{restatable}{lemma}{lemmajacdet}
\label{lemma:jacdet}
The Jacobian determinant of $f(x)=H(Wx)x$ is: 
\begin{align*}
-\det(A)\left(1+ 2\frac{v^TA^{-1} u}{||u||^2}\right)\text{ where }v^T=x^TW , u=Wx \text{ and } A=I-2\frac{x^TW^Tx}{||Wx||^2}W.
\end{align*}
\end{restatable}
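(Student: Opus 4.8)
The plan is to start directly from the matrix formula for the Jacobian established in \Cref{lemma:jac}, namely $J=H(Wx)A-2\frac{Wxx^TW}{\|Wx\|^2}$, and to reduce the computation of $\det(J)$ to two standard facts: the determinant of a Householder reflection and the rank-one update (matrix determinant) lemma. Writing $u=Wx$ and $v^T=x^TW$ as in the statement, the Jacobian becomes $J=H(u)A-2\frac{uv^T}{\|u\|^2}$, so the whole claim is really a determinant identity for a Householder matrix times $A$, perturbed by a single rank-one term.

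The key algebraic step is to factor $H(u)$ out of $J$ on the left, and here I would exploit the defining property of a reflection about $u$: since $H(u)u=u-2\frac{uu^Tu}{\|u\|^2}=-u$, applying $H(u)$ to the rank-one term flips its sign. Concretely, $H(u)\bigl(A+\frac{2}{\|u\|^2}uv^T\bigr)=H(u)A+\frac{2}{\|u\|^2}(H(u)u)v^T=H(u)A-\frac{2}{\|u\|^2}uv^T=J$. Taking determinants and using $\det H(u)=-1$ (a Householder reflection has eigenvalue $-1$ along $u$ and $+1$ on its orthogonal complement, hence determinant $-1$) gives $\det(J)=-\det\!\bigl(A+\frac{2}{\|u\|^2}uv^T\bigr)$.

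It then remains to evaluate the determinant of this rank-one update of $A$. Assuming $A$ is invertible, I would factor $A+\frac{2}{\|u\|^2}uv^T=A\bigl(I+\frac{2}{\|u\|^2}A^{-1}uv^T\bigr)$, so that $\det\bigl(A+\frac{2}{\|u\|^2}uv^T\bigr)=\det(A)\det\bigl(I+\frac{2}{\|u\|^2}A^{-1}uv^T\bigr)$. The matrix determinant lemma $\det(I+pq^T)=1+q^Tp$, applied with $p=\frac{2}{\|u\|^2}A^{-1}u$ and $q=v$, yields $\det(A)\bigl(1+2\frac{v^TA^{-1}u}{\|u\|^2}\bigr)$; combining with the sign contributed by the reflection produces exactly the claimed formula.

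The only point requiring care is the invertibility of $A$, since $A^{-1}$ appears explicitly in the target expression. I would observe that the formula is a polynomial identity once one writes $\det(A)A^{-1}=\mathrm{adj}(A)$, so the right-hand side equals $\det(A)+2\frac{v^T\mathrm{adj}(A)u}{\|u\|^2}$, which is well defined for every $A$; the identity thus extends from the generic invertible case to all $A$ by continuity, and assuming $A$ invertible in the derivation loses no generality. Beyond this mild subtlety, every step is a routine application of the two determinant identities, so I do not expect a serious obstacle.
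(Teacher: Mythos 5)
Your proof is correct and takes essentially the same route as the paper: both reduce $\det(J)$ to the matrix determinant lemma combined with the Householder facts $H(u)u=-u$ and $\det H(u)=-1$, the only cosmetic difference being that you factor $H(u)$ out of $J$ before applying the rank-one determinant identity, whereas the paper applies the identity directly to $J=M+ab^T$ with $M=H(Wx)A$ and simplifies afterwards using $H(Wx)^{-1}=H(Wx)$. Your closing remark on extending past the invertibility of $A$ via the adjugate is a small refinement the paper leaves implicit.
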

It is then sufficient that $\det(A)\neq 0$ and $1+2v^TA^{-1}u/||u||^2 \neq 0$. 
We prove that this happens if $W=W^T$ with eigenvalues $3/2\cdot \lambda_{\min}(W) > \lambda_{\max}(W)$. 
This can be achieved with $W=I+VV^T$ if we guarantee $\sigma_{\max}(VV^T)<1/2$ by spectral normalization \citep{snorm}. 
Combining these results yields \Cref{thm:tinv}. 
\tinv*
\paragraph{Computing the Inverse. }In practice, we use Newtons method to compute $x$ so $H(Wx)x=y$. 
\Cref{fig:inv} show reconstructions $n^{-1}(n(x))=x$ for an invertible neural network $n$ with auxiliary reflections using Newtons method, 
see \Cref{app:inv} for details. 
\begin{figure}[h!]
    \centering
    \includegraphics[width=\textwidth]{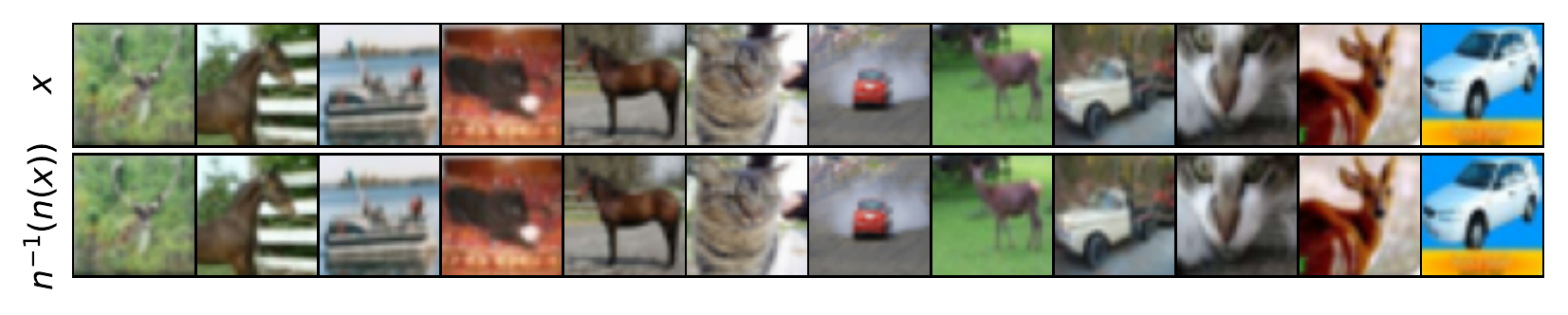}
    \caption{CIFAR10 \citep{cifar} images $x$ and reconstructions $n^{-1}(n(x))$ for an invertible neural network $n$ called Glow \citep{glow}. 
    The network uses auxiliary reflections and we compute their inverse using Newtons method, see \Cref{app:inv} for details.}
    \label{fig:inv}
\end{figure}

\subsection{Proofs}\label{sec:proof}
The goal of this section is to prove that $f(x)=H(Wx)x$ is invertible. 
Our proof strategy has two parts. 
\Cref{sss:suffice} first shows $f$ is invertible if it has non-zero Jacobian determinant. 
\Cref{sss:nnz} then present an expression for the Jacobian determinant, \Cref{lemma:jacdet}, and prove the expression is non-zero if $W=W^T$ and $3/2\cdot \lambda_{\min}(W)> \lambda_{\min}(W)$. 

\subsubsection{Non-Zero Jacobian Determinant Implies Invertibility }\label{sss:suffice}
In this section, we prove that $f(x)=H(Wx)x$ is invertible on $\mathbb{R}^d$ if $f$ has non-zero Jacobian determinant.
To simplify matters, we first prove that invertibility on $S^{d-1}$ implies invertibility on $\mathbb{R}^d$. Informally, invertibility on  $S^{d-1}$ is sufficient because $H(Wx)$ is scale invariant, i.e., $H(c\cdot Wx)=H(Wx)$ for all $c\neq 0$. This is formalized by \Cref{lemma:sd}. 
\begin{lemma}\label{lemma:sd}
If $f(x)=H(Wx)x$ is invertible on $S^{d-1}$ it is also invertible on $\mathbb{R}^d\backslash \{0\}$.  
\end{lemma}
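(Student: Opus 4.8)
The plan is to exploit two elementary properties of the map $f(x)=H(Wx)x$: that it preserves norms and that it is homogeneous of degree one. Both follow immediately from the definition $H(v)=I-2vv^T/\|v\|^2$, so no new machinery is required beyond the scale invariance already noted in the text.

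First I would record the scale invariance $H(cv)=H(v)$ for every $c\neq 0$, which holds because the factors $c^2$ in $(cv)(cv)^T$ and in $\|cv\|^2$ cancel. Applying this with $v=Wx$ and using linearity of $W$ gives, for any $c\neq 0$,
$$f(cx)=H(cWx)(cx)=H(Wx)(cx)=c\,H(Wx)x=c\,f(x),$$
so $f$ is homogeneous of degree one over all nonzero scalars. Second, since each Householder matrix $H(Wx)$ is orthogonal, I would observe $\|f(x)\|=\|H(Wx)x\|=\|x\|$; in particular $f$ maps $S^{d-1}$ into $S^{d-1}$, so the hypothesis ``invertible on $S^{d-1}$'' means precisely that $f$ is a bijection of the sphere onto itself.

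With these two facts the extension is routine. For injectivity, suppose $f(x_1)=f(x_2)$ with $x_1,x_2\neq 0$; norm preservation forces $\|x_1\|=\|x_2\|=:r>0$, and writing $x_i=r\omega_i$ with $\omega_i\in S^{d-1}$, homogeneity gives $r\,f(\omega_1)=r\,f(\omega_2)$, hence $f(\omega_1)=f(\omega_2)$, so injectivity on the sphere yields $\omega_1=\omega_2$ and thus $x_1=x_2$. For surjectivity, given $y\neq 0$ set $s=\|y\|$ and $\eta=y/s\in S^{d-1}$; choosing $\omega\in S^{d-1}$ with $f(\omega)=\eta$ by surjectivity on the sphere, homogeneity gives $f(s\omega)=s\,\eta=y$ with $s\omega\neq 0$.

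I do not anticipate a genuine obstacle here. The only points demanding care are confirming that $f$ really does send the sphere to the sphere (so the hypothesis concerns a self-map of $S^{d-1}$), and checking that homogeneity holds for \emph{all} nonzero scalars rather than only positive ones, so that the argument covers every $x\neq 0$ at once instead of ray by ray. Well-definedness of $f$ away from the origin is automatic, since the hypothesis already requires $f$ to be defined on $S^{d-1}$.
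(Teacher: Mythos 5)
Your proof is correct and takes essentially the same approach as the paper's: exploit the scale invariance $H(cWx)=H(Wx)$ to rescale any nonzero vector to the unit sphere and back, reducing invertibility on $\mathbb{R}^d\backslash\{0\}$ to the assumed invertibility on $S^{d-1}$. Your write-up is in fact slightly more complete, since the paper only constructs a preimage for each nonzero $y$ (surjectivity), while you also make explicit the norm-preservation argument that gives injectivity.
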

\begin{proof}
Assume that $f(x)$ is invertible on $S^{d-1}$. Pick any $y'\in \mathbb{R}^d$ such that  $||y'||=c$ for any $c>0$. 
Our goal is to compute $x'$ such that $H(Wx')x'=y'$. 
By normalizing, we see $y'/\|y'\|\in S^{d-1}$.
We can then use the inverse $f^{-1}$ on $y'/\|y'\|$ to find $x$ such that $H(Wx)x=y'/\|y\|$. 
The result is then $x'=x\|y\|$ since $H(Wx')x'=H(Wx)x||y||=y$ due to scale invariance of $H(Wx)$. 
\end{proof}

The main theorem we use to prove invertibiliy on $S^{d-1}$ is a variant of \emph{Hadamards global function inverse theorem} from \citep{hadamardthm}. 
On a high-level, Hadamard's theorem says that a function is invertible if it has non-zero Jacobian determinant and satisfies a few additional conditions. 
It turns out that these additional conditions are meet by any continuously differentiable function $f(x)$ when (in the notation of \Cref{thm:hadamard}) $M_1=M_2=S^{d-1}$. 
\begin{theorem}\citep[6.2.8]{hadamardthm}\label{thm:hadamard}
Let $M_1$ and $M_2$ be smooth, connected $N$-dimensional manifolds and let $f:M_1\rightarrow M_2$ be continuously differentiable. If (1) $f$ is proper, (2) the Jacobian of $f$ is non-zero, and (3) $M_2$ is simple connected, then $f$ is invertible. 
\end{theorem}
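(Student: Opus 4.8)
The plan is to prove Hadamard's theorem by showing that, under the three hypotheses, $f$ is a covering map onto $M_2$, and then invoking the classical fact that a covering of a connected, simply connected base must be single-sheeted, hence a bijection. First I would exploit hypothesis (2): since $M_1$ and $M_2$ both have dimension $N$ and the Jacobian of $f$ is everywhere non-zero, the differential $df_p$ is a linear isomorphism at every $p\in M_1$. The inverse function theorem then makes $f$ a local diffeomorphism, and in particular an open map and a local homeomorphism, so that every fiber $f^{-1}(q)$ is discrete.

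Next I would establish surjectivity. A proper continuous map into a locally compact Hausdorff space (which every smooth manifold is) is closed, so $f(M_1)$ is closed in $M_2$; it is also open because $f$ is an open map. Since $M_2$ is connected and $f(M_1)$ is non-empty, this forces $f(M_1)=M_2$.

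The technical heart of the argument, and the step I expect to be the main obstacle, is to upgrade ``proper local homeomorphism'' to ``covering map.'' Fix $q\in M_2$. By properness the fiber $f^{-1}(q)$, being the preimage of the compact set $\{q\}$, is compact; being also discrete it is therefore finite, say $\{p_1,\dots,p_m\}$. I would choose pairwise disjoint open neighborhoods $W_i\ni p_i$ on which $f$ restricts to a homeomorphism onto an open set. The delicate part is to produce a single connected neighborhood $V\ni q$ that is \emph{evenly covered}, i.e.\ with $f^{-1}(V)=\bigsqcup_{i=1}^m V_i$, each $V_i\subseteq W_i$, and $f|_{V_i}\colon V_i\to V$ a homeomorphism. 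Here one again uses properness to rule out sheets escaping to infinity: if no such $V$ existed, one could extract a sequence of points of $M_1$ leaving every $W_i$ yet mapping into shrinking neighborhoods of $q$, whose limit would lie in $f^{-1}(q)$ outside the $W_i$, a contradiction. This exhibits $f$ as a finite-sheeted covering map.

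Finally I would conclude using covering-space theory together with hypothesis (3). Since $M_1$ is connected and the base $M_2$ is connected and simply connected, the number of sheets equals the index $[\pi_1(M_2):f_*\pi_1(M_1)]$, which is $1$ because $\pi_1(M_2)$ is trivial. Hence $f$ is a single-sheeted covering, i.e.\ a homeomorphism, and being a bijective local diffeomorphism it is in fact a diffeomorphism; in particular $f$ is invertible, as claimed.
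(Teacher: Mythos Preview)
Your proposal is a correct and standard proof of Hadamard's global inverse function theorem: local diffeomorphism from the inverse function theorem, properness to get closedness and finite fibers, the ``no escaping sheets'' argument to upgrade proper local homeomorphism to covering map, and finally the covering-space fact that a connected cover of a simply connected base is one-sheeted. Each step is sound as sketched.

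However, there is nothing to compare against: the paper does not prove this theorem. It is quoted as an external result from \citep[6.2.8]{hadamardthm} and used as a black box to derive \Cref{cor}. So your write-up is not an alternative to the paper's argument but rather a self-contained proof of a result the paper simply cites.
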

For $M_1=M_2=S^{d-1}$ the additional conditions are met if $f$ is continuously differentiable. 
\begin{restatable}{corollary}{cor}\label{cor}
Let $f:S^{d-1}\rightarrow S^{d-1}$ with $d\ge 2$ be continuously differentiable with non-zero Jacobian determinant, then $f$ is invertible. 
\end{restatable}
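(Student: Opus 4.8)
The plan is to apply Hadamard's global inverse theorem (\Cref{thm:hadamard}) directly, taking $M_1=M_2=S^{d-1}$. For $d\ge 2$ the sphere $S^{d-1}$ is a smooth, connected manifold of dimension $N=d-1$, so the ambient hypotheses of \Cref{thm:hadamard} are met, and $f$ is continuously differentiable by assumption. It then remains to verify the three numbered conditions: that $f$ is proper, that its Jacobian is non-zero, and that the codomain is simply connected.

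Two of these are immediate. The Jacobian condition (2) is exactly the hypothesis of the corollary. For properness (1), I would use that $S^{d-1}$ is \emph{compact}: any continuous map out of a compact space is automatically proper, since for any closed (hence compact) $K\subseteq S^{d-1}$ the preimage $f^{-1}(K)$ is a closed subset of the compact space $S^{d-1}$ and is therefore compact. No structure of $f$ beyond continuity is used here, so this step is essentially free.

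The remaining condition (3), simple connectedness of the codomain, is where the real content lies, and I expect it to be the main obstacle. For $d\ge 3$ we have $S^{d-1}=S^{N}$ with $N\ge 2$, which is simply connected; hence all three hypotheses of \Cref{thm:hadamard} hold and $f$ is invertible. The delicate case is $d=2$, where $S^{1}$ has fundamental group $\mathbb{Z}$ and is \emph{not} simply connected, so \Cref{thm:hadamard} does not apply verbatim. To handle it I would argue separately, via covering-space theory: a proper local diffeomorphism between compact connected manifolds is a covering map, so $f:S^{1}\to S^{1}$ is a $k$-fold cover for some $k\ge 1$, and the task reduces to proving $k=1$. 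This last step cannot follow from the bare hypotheses alone—for a \emph{general} continuously differentiable $f:S^{1}\to S^{1}$ with non-vanishing derivative (for instance the angle-doubling map) injectivity genuinely fails—so I would expect to invoke a degree or homotopy argument tied to the specific maps $f(x)=H(Wx)x$ to which the corollary is ultimately applied, rather than hoping for a purely abstract conclusion in dimension two.
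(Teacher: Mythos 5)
Your proposal follows the paper's route — \Cref{thm:hadamard} with $M_1=M_2=S^{d-1}$, properness for free from compactness of $S^{d-1}$, Jacobian condition by hypothesis — and for $d\ge 3$ it is essentially word-for-word the paper's proof. The divergence is at $d=2$, and there you are right and the paper is not: the paper's proof disposes of condition (3) by asserting that ``$S^{d-1}$ is smooth and simply connected if $d\ge 2$,'' but this is false when $d=2$, since $\pi_1(S^1)=\mathbb{Z}$. Your refusal to apply \Cref{thm:hadamard} in that case is not a deficiency of your argument; it exposes a genuine gap in the paper's proof of \Cref{cor}. Indeed, as you note, no proof of the corollary as stated can exist for $d=2$, because the statement is then false: the angle-doubling map $e^{i\theta}\mapsto e^{2i\theta}$ is smooth on $S^1$ with nowhere-vanishing Jacobian, yet is $2$-to-$1$. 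Your covering-space analysis pinpoints exactly what extra input is needed: a proper local diffeomorphism of $S^1$ is a $k$-fold cover, and nothing in the abstract hypotheses forces $k=1$, so one must use the specific form $f(x)=H(Wx)x$ — e.g.\ a degree/homotopy argument connecting $W$ to $I$, for which $H(x)x=-x$ is a rotation of $S^1$ and hence has degree one — to rule out $k\ge 2$. The practical upshot is that \Cref{cor}, and hence the $d\ge 2$ claims of \Cref{thm:inv} and \Cref{thm:tinv}, are only justified by the paper's argument for $d\ge 3$; the two-dimensional case needs the separate treatment you sketch.
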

\begin{proof}
Note that $S^{d-1}$ is smooth and simply connected if $d\ge 2$ \citep{sdsmooth}. 
Continuously functions on $S^{d-1}$ are proper. 
We conclude $f$ is invertible on $S^{d-1}$ by \Cref{thm:hadamard}. 
\end{proof}
We now show that $f(x)=H(Wx)x$ is continuously differentiable on $S^{d-1}$. 
\begin{lemma} The function $f(x)=H(Wx)x$ is continuously differentiable on $S^{d-1}$ if $W$ is invertible. \label{lemma:c1}
\end{lemma}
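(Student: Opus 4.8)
The plan is to recognize $f$ as the restriction of a map that is manifestly smooth on the ambient space away from the origin, and then invoke the standard fact that a smooth ambient map carrying one embedded submanifold into another restricts to a smooth map between them. So the work splits into an algebraic check (the map is a quotient of polynomials with nonvanishing denominator) and a geometric transfer (ambient smoothness descends to the sphere).

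First I would expand the defining expression. Since $(Wx)^Tx = x^TW^Tx$ is a scalar,
$$f(x)=H(Wx)x = x - 2\frac{x^TW^Tx}{\|Wx\|^2}\,Wx,$$
so each coordinate of $f(x)$ is a ratio of polynomials in $x$, sharing the single denominator $\|Wx\|^2 = x^TW^TWx$. Next I would argue this denominator never vanishes on the relevant domain: because $W$ is invertible, $W^TW$ is symmetric positive definite, hence $x^TW^TWx>0$ for every $x\neq 0$. Therefore the map $g(x)=x-2(x^TW^Tx)\,Wx/\|Wx\|^2$ is well defined and $C^\infty$ (in particular $C^1$) on the open set $\mathbb{R}^d\backslash\{0\}$, which contains $S^{d-1}$.

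Finally I would pass from the ambient map to the manifold map. Since Householder matrices are orthogonal, $\|f(x)\| = \|H(Wx)x\| = \|x\| = 1$ for $x\in S^{d-1}$, so $f$ maps $S^{d-1}$ into $S^{d-1}$. As $S^{d-1}$ is a smoothly embedded submanifold of $\mathbb{R}^d$ and $g$ is $C^1$ on a neighborhood of it, the restriction $f = g|_{S^{d-1}}\colon S^{d-1}\to S^{d-1}$ is continuously differentiable as a map between manifolds.

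The only real subtlety — and the single step I would state explicitly rather than leave to the reader — is this last transfer: continuous differentiability is a manifold notion requiring charts, and the clean justification is the standard lemma that a $C^1$ map defined on an ambient open set, whose image lies in an embedded submanifold, is $C^1$ into that submanifold; here it is applied with the inclusion $S^{d-1}\hookrightarrow\mathbb{R}^d$. Everything else is the routine observation that a quotient of polynomials is smooth wherever its denominator is nonzero, so I would not belabor those computations.
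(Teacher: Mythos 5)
Your proof is correct and takes essentially the same approach as the paper: both arguments reduce to the observation that $f$ is a quotient of smooth functions whose only potentially problematic denominator, $\|Wx\|^2$, vanishes only at $x=0$ (because $W$ is invertible), and $0\notin S^{d-1}$. The extra care you take --- verifying that $f$ maps $S^{d-1}$ into itself and invoking the restriction lemma for embedded submanifolds --- is left implicit in the paper's proof, and it is a genuine tightening, since the later application of Hadamard's theorem needs $f$ to be a $C^1$ map \emph{between} the manifolds $S^{d-1}\to S^{d-1}$, not merely $C^1$ on a neighborhood in $\mathbb{R}^d$.
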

\begin{proof}
Compositions of continuously differentiable functions are continuously differentiable by the chain rule. All the functions used to construct $H(Wx)x$ are continuously differentiable, except the division. 
However, the only case where division is not continously differentiable is when $||Wx||=0$. 
Since $W$ is invertible, $||Wx||=0$ iff $x=0$. But $0\notin S^{d-1}$ and we conclude $f$ is continuously differentiable on $S^{d-1}$. 
\end{proof}

\thminv*
\begin{proof}
By \Cref{lemma:c1}, we see $f$ is continuously differentiable since $W$ is invertible, which by \Cref{cor} means $f$ is invertible on $S^{d-1}$ if $f$ has non-zero Jacobian determinant on $S^{d-1}$. 
By \Cref{lemma:sd}, we get that $f$ is invertible on $\mathbb{R}^d$ if it has non-zero Jacobian on $S^{d-1}$. 
\end{proof}


%
%

\subsubsection{Enforcing Non-Zero Jacobian Determinant }\label{sss:nnz}
The goal of this section is to present conditions on $W$ that ensures the Jacobian determinant of $f(x)$ is non-zero for all $x\in S^{d-1}$. 
We first present a matrix formula for the Jacobian of $f$ in \Cref{lemma:jac}. 
By using the \emph{matrix determinant lemma}, we get a formula for the Jacobian determinant in \Cref{lemma:jacdet}. 
By investigating when this expression can be zero, we finally arive at \Cref{lemma:eig} which states that the Jacobian determinant is non-zero (and $f$ thus invertible) if $W=W^T$ and $3/2\cdot \lambda_{\min}>\lambda_{\max}$.  

\lemmajac*

See \Cref{app:autograd} for PyTorch implementation of $J$ and a test case against PyTorch autograd. 
\begin{proof}
The $(i,j)$'th entry of the Jacobian determinant is, by definition, 
$$\frac{\partial (x-2\cdot \frac{Wxx^TW^Tx}{||Wx||^2})_i}{\partial x_j}=\mathds{1}_{i =j}- 2 \cdot \frac{\partial (Wx)_i\cdot \frac{x^TW^Tx} {||Wx||^2}}{\partial x_j}.
$$
Then, by the product rule, we get
\begin{align*}
\frac{\partial (Wx)_i\cdot \frac{x^TW^Tx} {||Wx||^2}}{\partial x_j}&=
\frac{\partial (Wx)_i}{\partial x_j}\cdot \frac{x^TW^Tx}{||Wx||^2} + (Wx)_i \cdot \frac{\partial \frac{x^TW^Tx} {||Wx||^2}}{\partial x_j} \\
&=W_{ij} \cdot \frac{x^TW^Tx}{||Wx||^2} \hspace{6.75mm}+ 
(Wx)_i \cdot \frac{\partial x^TW^Tx \cdot \frac{1}{||Wx||^2}}{\partial x_j} .
\end{align*}
The remaining derivative can be found using the product rule. 
\begin{align*}
\frac{\partial x^TW^Tx \cdot \frac{1}{||Wx||^2}}{\partial x_j} &= 
\frac{\partial x^TW^Tx }{\partial x_j} \cdot \frac{1}{||Wx||^2} +
 x^TW^Tx \cdot \frac{\partial \frac{1}{||Wx||^2}}{\partial x_j}.
\end{align*}
First, \citep{cookbook} equation (81) gives $\frac{\partial x^TW^Tx}{\partial x_j}=((W^T+W)x)_j$. Second $||Wx||^{-2}$ can be found using the chain rule: 
\begin{align*}
    \frac{\partial (||Wx||^2)^{-1}}{\partial x_j}&=
    \frac{\partial (||Wx||^2)^{-1}}{\partial ||Wx||^2}
    \frac{\partial ||Wx||^2}{\partial x_j}
    \\&= 
    -\frac{1}{||Wx||^4}
    \left( \frac{\partial x^TW^TWx}{\partial x}\right)_j 
    \\&= -\frac{1}{||Wx||^4}
    ((W^TW+(W^TW)^T)x)_j &\text{\citep[equ. 81]{cookbook} }
    \\&= -\frac{1}{||Wx||^4}
    2(W^TWx)_j .
\end{align*}
Combining everything we get 
$$J_{ij} = \mathds{1}_{i=j}-2 \left[ \frac{x^TW^Tx}{||Wx||^2}\cdot W_{ij}  + (Wx)_i 
\left( 
\frac{1}{||Wx||^2}\cdot ((W^T+W)x)_j - \frac{2x^TW^Tx}{||Wx||^4} \cdot (W^TWx)_j 
\right) \right]. $$
In matrix notation, this translates into the following, if we let $A=I-2\cdot \frac{x^TW^Tx}{||Wx||^2}W$. 
\begin{align*}
J&=I- 2\left[ 
\frac{x^TW^Tx}{||Wx||^2} \cdot W + Wx 
\left(\frac{1}{||Wx||^2}\cdot x^T(W+W^T) - \frac{2x^TW^Tx}{||Wx||^4}\cdot x^TW^TW\right)
\right]\\
&=I- 2 
\cdot \frac{x^TW^Tx}{||Wx||^2} \cdot W - 
2\cdot \frac{Wxx^TW}{||Wx||^2} - 
2\cdot \frac{Wxx^TW^T}{||Wx||^2} \left(I-2\cdot \frac{x^TW^Tx}{||Wx||^2}W\right)\\
&=A - 
2\cdot \frac{Wxx^TW}{||Wx||^2} - 
2\cdot \frac{Wxx^TW^T}{||Wx||^2} A\\
&=\left(I- 2\cdot \frac{Wxx^TW^T}{||Wx||^2} \right)A-
2\cdot \frac{Wxx^TW}{||Wx||^2} =
H(Wx)A - 2\cdot \frac{Wxx^TW}{||Wx||^2}. 
\end{align*}
This concludes the proof. 
\end{proof}
\Cref{lemma:jac} allows us to write $J$ as a rank one update $M+ab^T$ for $a,b\in\mathbb{R}^d$, which can be used to simplify $\det(J)$ as stated in the following lemma. 

\lemmajacdet*

\begin{proof}
The \emph{matrix determinant lemma} allows us to write $\det(M+ab^T)=\det(M)(1+b^TM^{-1}a)$. 
Let $M=H(Wx)A$ and $b^T=-2\cdot x^TW/||Wx||^2$ and $a=Wx$. 
The Jacobian $J$ from \Cref{lemma:jac} is then $J=M+ab^T$. 
The determinant of $J$ is then: 
\begin{align*}
    \det(J)&=\det(M)(1+b^TM^{-1}a)\\
    &=\det(H(Wx)\cdot A)\left(1-2\frac{x^TW( H(Wx)\cdot A)^{-1}Wx}{||Wx||^2}\right)\\
    &=-\det(A)\left(1+2\frac{x^TWA^{-1}Wx}{||Wx||^2}\right).
\end{align*}
This is true because $H(Wx)^{-1}=H(Wx)$, $H(Wx)\cdot Wx=-Wx$ and $\det(H(Wx))=-1$.
\end{proof}

We can now use \Cref{lemma:jacdet} to investigate when the Jacobian determinant is non-zero. In particular, the Jacobian determinant must be non-zero if both $\det(A)\neq 0$ and $1+2v^TA^{-1}u/||u||^2\neq 0$. 
In the following lemma, we prove that both are non-zero if $W=W^T$ and $3/2\cdot \lambda_{\min}> \lambda_{\max}$.

\begin{restatable}{lemma}{lemmaeig}\label{lemma:eig}
\label{lemma:nonzero}
Let $W=W^T$ and $3/2 \cdot \lambda_{\min} > \lambda_{\max}$ then $\lambda_i(A^{-1})< -1/2$ for $A$ from \Cref{lemma:jacdet}. 
These conditions imply that $\det(A)\neq 0$ and $1+2v^TA^{-1}u/||u||^2 \neq 0$ with $v^T,u$ from \Cref{lemma:jacdet}
\end{restatable}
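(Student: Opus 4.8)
The plan is to exploit the symmetry of $W$ to collapse everything onto a single scalar. Since $W=W^T$ we have $x^TW^Tx=x^TWx$, so the matrix from \Cref{lemma:jacdet} becomes $A=I-2cW$ with the scalar $c=x^TWx/||Wx||^2$. Diagonalising $W=Q\Lambda Q^T$, the eigenvalues of $A$ are $1-2c\lambda_i(W)$ and those of $A^{-1}$ are $1/(1-2c\lambda_i(W))$. A one-variable computation shows $1/(1-2t)<-1/2$ holds exactly when $1/2<t<3/2$, so the target $\lambda_i(A^{-1})<-1/2$ is equivalent to $1/2<c\lambda_i(W)<3/2$ for every eigenvalue. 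I would first record that $3/2\cdot\lambda_{\min}>\lambda_{\max}$ together with $\lambda_{\min}\le\lambda_{\max}$ forces $\lambda_{\min}>0$ (if $\lambda_{\min}\le 0$ the hypothesis would push $\lambda_{\max}$ strictly below $\lambda_{\min}$), so $W$ is positive definite with $\lambda_{\max}/\lambda_{\min}<3/2$; in particular $W$ is invertible and $Wx\neq 0$ for $x\neq 0$, so $c$ is well defined.

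The crux is a two-sided bound on $c$. Writing $y=Q^Tx$ gives $c=\sum_i\lambda_iy_i^2/\sum_i\lambda_i^2y_i^2$, which I would rewrite as the convex combination $c=\sum_i w_i/\lambda_i$ with weights $w_i=\lambda_i^2y_i^2/\sum_k\lambda_k^2y_k^2\ge 0$ summing to one. Hence $1/\lambda_{\max}\le c\le 1/\lambda_{\min}$; equivalently these are the inequalities $\sum_i\lambda_i(\lambda_i-\lambda_{\min})y_i^2\ge 0$ and $\sum_i\lambda_i(\lambda_{\max}-\lambda_i)y_i^2\ge 0$, both clear since every $\lambda_i>0$. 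Multiplying by $\lambda_i\in[\lambda_{\min},\lambda_{\max}]$ yields $c\lambda_i\le\lambda_{\max}/\lambda_{\min}<3/2$ and $c\lambda_i\ge\lambda_{\min}/\lambda_{\max}>2/3>1/2$, which is exactly the interval needed to conclude $\lambda_i(A^{-1})<-1/2$.

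The two consequences then drop out. Since $c\lambda_i>1/2$ gives $\lambda_i(A)=1-2c\lambda_i<0$ for all $i$, the matrix $A$ has only negative eigenvalues, so $\det(A)=\prod_i(1-2c\lambda_i)\neq 0$. For the second factor I would use the symmetry of $W$ once more: $v^T=x^TW$ means $v=Wx=u$, hence $1+2v^TA^{-1}u/||u||^2=1+2(Wx)^TA^{-1}(Wx)/||Wx||^2$. As $A^{-1}$ is symmetric, the fraction is a Rayleigh quotient of $A^{-1}$ at the nonzero vector $Wx$, so it is at most $\lambda_{\max}(A^{-1})<-1/2$; therefore the whole expression is strictly less than $1+2\cdot(-1/2)=0$, and in particular nonzero.

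The main obstacle is the middle step, obtaining the sharp interval $1/2<c\lambda_i<3/2$. Everything hinges on the bound $1/\lambda_{\max}\le c\le 1/\lambda_{\min}$ and on translating the spectral hypothesis into $\lambda_{\max}/\lambda_{\min}<3/2$ and $\lambda_{\min}/\lambda_{\max}>2/3$; the positive-definiteness observation is what makes the convex-combination argument valid, so I would be careful to establish $\lambda_{\min}>0$ before invoking it. The identity $v=u$ that turns the second factor into a Rayleigh quotient is the only other non-mechanical point.
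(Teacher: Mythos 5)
Your proof is correct and takes essentially the same route as the paper's: both reduce the problem to the scalar $\gamma_i = c\,\lambda_i(W)$ with $c = x^TWx/\|Wx\|^2$ viewed as a Rayleigh quotient of $W^{-1}$, show $\gamma_i \in (1/2,3/2)$ forces $\lambda_i(A^{-1}) < -1/2$, and then dispatch the factor $1+2v^TA^{-1}u/\|u\|^2$ by noting $v=u=Wx$ and bounding the Rayleigh quotient of the symmetric matrix $A^{-1}$. Your two additions are improvements in rigor rather than in route: you explicitly derive $\lambda_{\min}(W)>0$ (positive definiteness), which the paper leaves implicit but needs for its interval $[1/\lambda_{\max},1/\lambda_{\min}]$ to make sense, and your convex-combination computation proves by hand the standard Rayleigh-quotient bound that the paper merely cites.
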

\begin{proof}
We first show that the inequality $3/2\cdot \lambda_{\min}(W)>\lambda_{\max}(W)$ implies $\lambda_i(A^{-1})<-1/2$. 
\begin{align*}
\lambda_i(A^{-1}) = \frac{1}{\lambda_i(A)}
= \frac{1}{1 -2\frac{x^TW^Tx}{||Wx||^2}\lambda_i(W)}
\end{align*}
If $\gamma_i:=\frac{x^TW^Tx}{||Wx||^2}\cdot \lambda_i(W)\in (1/2, 3/2)$ we get that $1/(1-2\gamma_i)\in (-\infty, -1/2)$ so $\lambda_i(A^{-1})< -1/2$. 
If we let $y:=Wx$ we get 
$\frac{x^TW^Tx}{||Wx||^2}=\frac{y^TW^{-1}y}{||y||^2}$. 
This is the \emph{Rayleigh quotient} of $W^{-1}$ at $y$, 
which for $W=W^T$ is within $[\lambda_{\min}(W^{-1}),\lambda_{\max}(W^{-1})]$. 
Therefore $\gamma_i \in [\frac{1}{\lambda_{\max}(W)}, \frac{1}{\lambda_{\min}(W)}]\cdot \lambda_i(W)$. 
Note first that $\gamma_{\min}\le 1$ and $\gamma_{\max} \ge 1$. It is left to show that $\gamma_{\min}\ge \lambda_{\min}/\lambda_{\max}>1/2$ and $\gamma_{\max}\le \lambda_{\max}/\lambda_{\min} < 3/2$. Both conditions on eigenvalues are met if $3/2\cdot \lambda_{\min} > \lambda_{\max}$.

We now want to show that $\det(A)\neq 0$ and $1+2v^TA^{-1}u/||u||^2\neq 0$. 
First, notice that $\det(A)=\prod_{i=1}^d\lambda_i(A)\neq 0$ since $\lambda_i(A)<-1/2$. Second, note that $W=W^T$ implies that the $v^T$ from \Cref{lemma:jacdet} can be written as $v^T=x^TW=x^TW^T=u^T$. 
This means we only need to ensure $u^TA^{-1}u/||u||^2$, the Rayleigh quotient of $A^{-1}$ at $u$, is different to $-1/2$. 
But $W=W^T$ implies $A=A^T$ because $A=I-2x^TW^Tx/||Wx||^2\cdot W$.
The Rayleigh quotient is therefore bounded by $[\lambda_{\min}(A^{-1}), \lambda_{\max}(A^{-1})]$, which means it is less than $-1/2$ since $\lambda_{i}(A^{-1})<-1/2$. 
We can then conclude that also $1+2v^TA^{-1}u/||u||^2=1+2u^TA^{-1}u/||u||^2 < 1+2\cdot -1/2=0$. 
\end{proof}
So $\det(J)\neq 0$ by \Cref{lemma:eig} and \Cref{lemma:jacdet}, which by \Cref{thm:inv} implies invertibility (\Cref{thm:tinv}). 

\paragraph{Remark. }Note that the constraints $W=W^T$ and $3/2\cdot \lambda_{\min}>\lambda_{\max}$ were introduced only to guarantee $\det(A)\neq 0$ and $1+2v^TA^{-1}u/||u||^2\neq 0$. 
Any argument or constraints on $W$ that ensures $\det(A)\cdot (1+v^TA^{-1}u/||u||^2)\neq 0$ are thus sufficient to conclude $f(x)$ is invertible. 

\section{Experiments}\label{sec:exp}
We compare a single auxiliary reflections against $d$ normal reflections when training Fully Connected Neural Networks ($d=784$), Recurrent Neural Networks ($d=170)$ and Normalizing Flows ($d=48$). 
The experiments demonstrate that neural networks with a single auxiliary reflections attain similar performance to neural networks with many normal reflections.
All plots show means and standard deviations over 3 runs. 
See \Cref{app:details} for experimental details.

\subsection{Fully Connected Neural Networks}\label{exp:fcn}
We trained four different Fully Connected Neural Networks (FCNNs) for classification on MNIST. 
We compared a FCNN with $6$ auxiliary reflections against a FCNN with $6$ orthogonal matrices each represented by $784$ normal reflections. 
For completeness, we also trained two FCNNs where the $6$ orthogonal matrices where attained by the matrix exponential and Cayley map, respectively, as done in \citep{exp,cayley}. 
The FCNN with auxiliary reflections attained slightly better validation error, see (\Cref{fig:fcn} left).  
Furthermore, we found the auxiliary reflections were $2$ to $6$ times faster than competing methods, see (\Cref{fig:fcn} right). This was true even though we used \citep{fasth} to speed up the sequential Householder reflections. See \Cref{app:fcn} for further details.

\subsection{Recurrent Neural Networks}\label{exp:rnn}
We trained three Recurrent Neural Networks (RNNs) for classification on MNIST as done in \citep{hhrnn}. 
The RNNs had a transition matrix represented by one auxiliary reflection, one normal reflection and $170$ auxiliary reflections.
See (\Cref{fig:rnn} left) for a validation error during training, including the model from \citep{hhrnn}. 
As indicated by the red curve, using only one normal reflection severely limits the transition matrix. 
In the right plot, we magnify the first 20 epochs to improve readability.
The RNNs with $1$ auxiliary reflection attains similar mean validation accuracy to the RNNs with $170$ normal reflections. 
See \Cref{app:rnn} for further details. 


\begin{figure}[t!]
    \centering
    \includegraphics[width=1\textwidth]{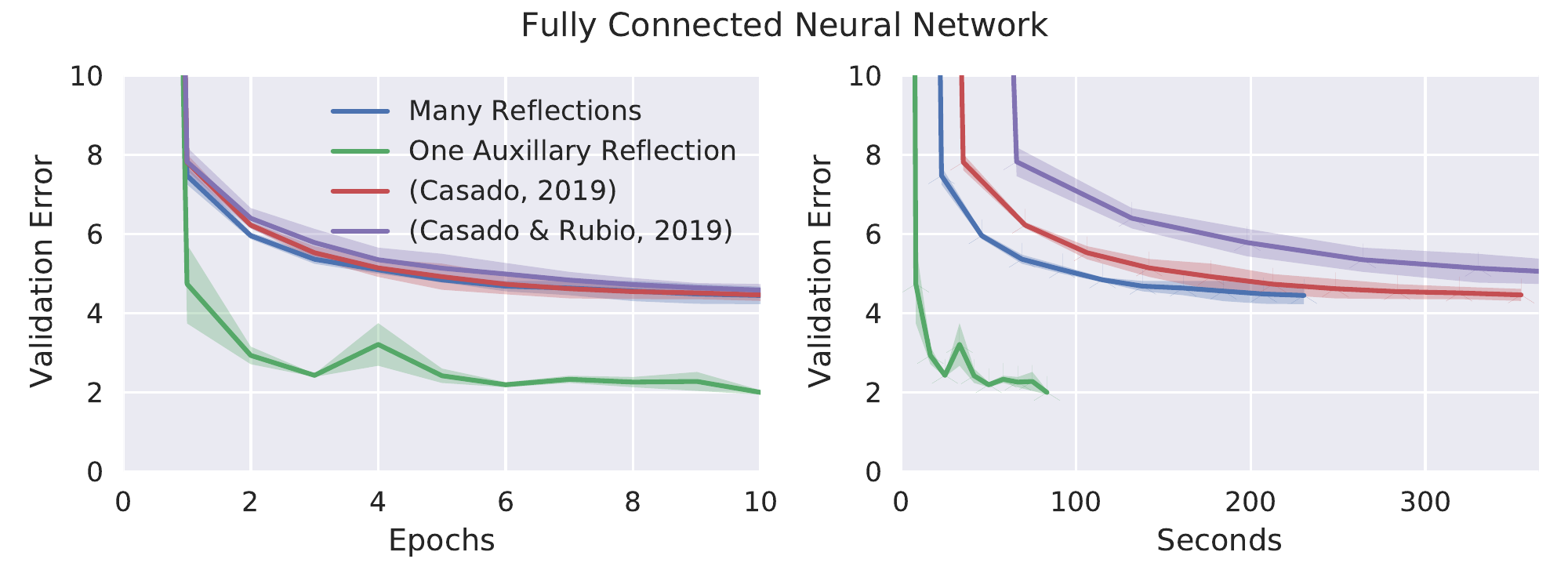}
    \caption{MNIST validation classification error in \% over epochs (left) and over time (right). Lower error mean better performance. 
    }
    \label{fig:fcn}
\end{figure}
\begin{figure}[t!]
    \centering
    \includegraphics[width=1\textwidth]{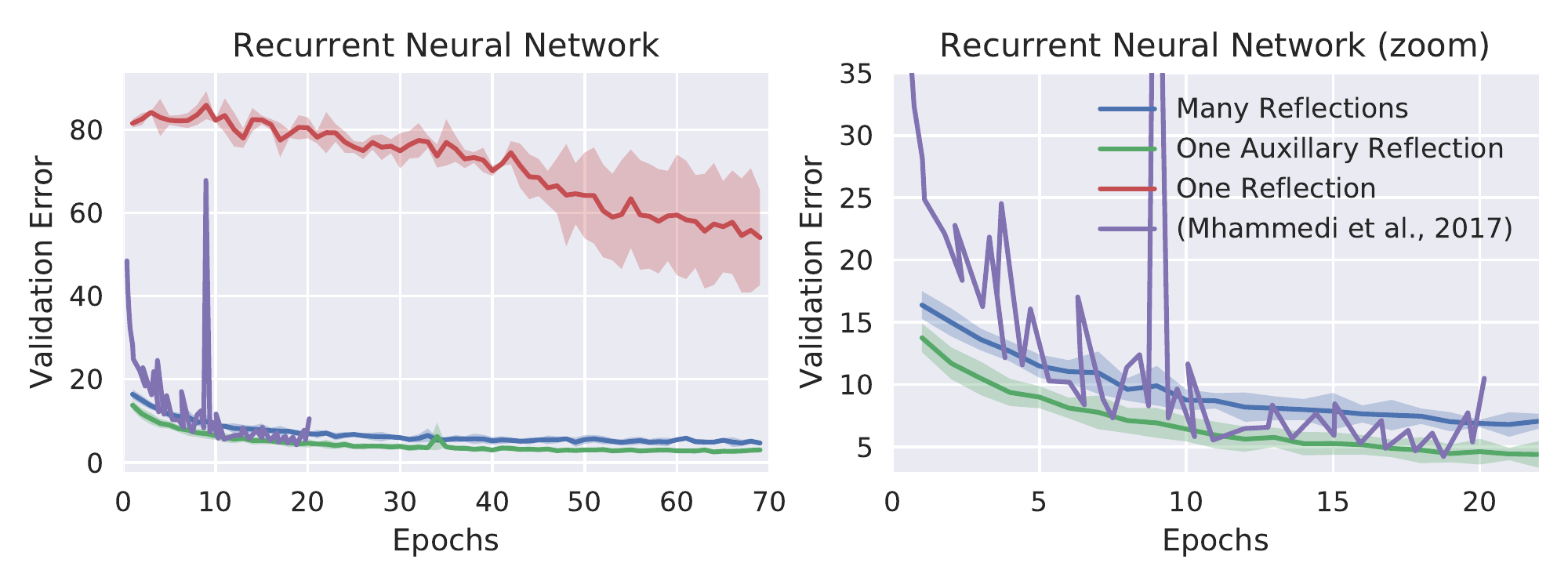}
    \caption{
        MNIST validation classification error for RNNs performing classification as done in \citep{hhrnn}. To improve readability, the right plot magnifies the first 20 epochs of the left. Lower error means better performance. 
     }
    \label{fig:rnn}
\end{figure}

\subsection{Normalizing Flows and Convolutions}\label{exp:nf}
We initially trained two Normalizing Flows (NFs) on CIFAR10. 
Inspired by \citep{emerging}, we used reflections to parameterize the 1x1 convolutions of an NF called Glow \citep{glow}, see \Cref{app:glow} for details. 
We trained an NF with many reflections and an NF with a single auxiliary reflection constrained to ensure invertible (see \Cref{subsec:jac}). 
The single auxiliary reflection attained worse validation NLL compared to the model with $48$ normal reflections. 

We suspected the decrease in performance was caused by the restrictions put on the weight matrices $W_i$ of the auxiliary reflections to enforce invertibility, i.e., $W_i=W_i^T$ and $3/2\cdot \lambda_{\min}(W_i)>\lambda_{\max}(W_i)$. 
To investigate this suspicion, we trained a model with no constraints on $W$. 
This improved performance to the point were one auxiliary reflections tied with many normal reflections (see \Cref{fig:nf} left). 

Even though the resulting auxiliary reflections are not provably invertible, 
we found that Newtons method consistently computed the correct inverse. 
Based on this observation, we conjecture that the training dynamics caused the auxiliary reflections to remain invertible. By this we mean that the auxiliary reflections were initialized with non-zero Jacobian determinants (see \Cref{app:glow}) and the loss function (\Cref{equ:nll}) encourages the auxiliary reflections to increase their Jacobian determinants during training. 
Since Newtons method consistently computed the correct inverse, we were able to generate samples from all models, see (\Cref{fig:nf} right). 

\begin{figure}[t!]
    \centering
    \includegraphics[width=\textwidth]{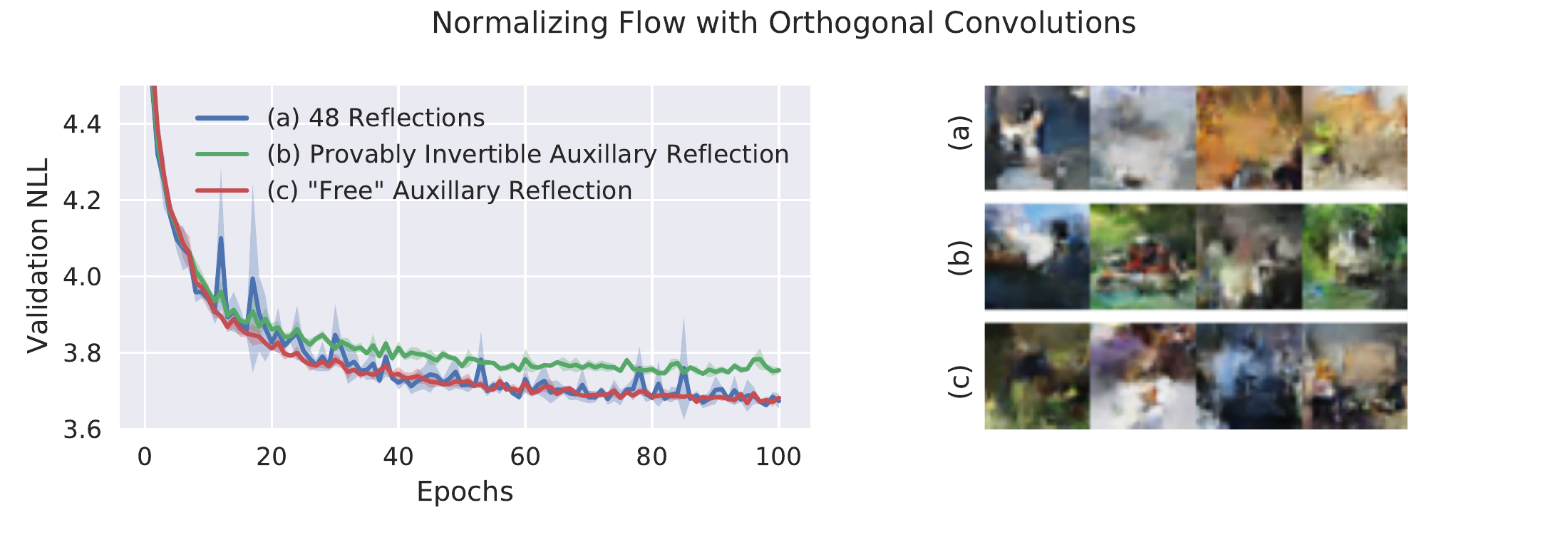}
    \caption{(Left) Validation negative log-likelihood (NLL) of three Normalizing Flows on CIFAR10. NLL is reported in bits per dimension, lower values mean better performance. (Right) Samples generated by different models, this required computing the inverse of auxiliary reflections. }
    \label{fig:nf}
\end{figure}


%

\section{Related Work}
\paragraph{Orthogonal Weight Matrices. }
Orthogonal weight matrices have seen widespread use in deep learning. For example, they have been used in Normalizing Flows \citep{emerging}, Variational Auto Encoders \citep{sylvesternf}, Recurrent Neural Networks \citep{hhrnn} and Convolutional Neural Networks \citep{orthcnn}. 

\paragraph{Different Approaches. }
There are several ways of constraining weight matrices to remain orthogonal. 
For example, previous work have used Householder reflections \citep{hhrnn}, the Cayley map \citep{cayley} and the matrix exponential \citep{exp}. 
These approaches are sometimes referred to as \emph{hard orthogonality constraints}, as opposed to \emph{soft orthogonality constraints}, which instead provide approximate orthogonality by using, e.g., regularizers like $||WW^T-I||_F$ (see \citep{orthcnn} for a comprehensive review). 

\paragraph{Reflection Based Approaches. }
The reflection based approaches introduce sequential computations, which is, perhaps, their main limitation. Authors often address this by reducing the number of reflections, as done in, e.g., \citep{hhnf,hhrnn,sylvesternf}. 
This is sometimes undesirable, as it limits the expressiveness of the orthogonal matrix. 
This motivated previous work to construct algorithms that increase parallelization of Householder products, see, e.g., \citep{fasth,cwy}. 

\paragraph{Similar Ideas. }
Normalizing Flows have been used for variational inference, see, e.g., \citep{hhnf,sylvesternf}. 
Their use of reflections is very similar to auxiliary reflections, however, there is a very subtle difference which has fundamental consequences. 
For a full appreciation of this difference, the reader might want to consult the schematic in \citep[Figure 1]{hhnf}, however, we hope that the text below clarifies the high-level difference. 

Recall that auxiliary reflections compute $H(Wx)x$ so $H(Wx)$ can depend on $x$. 
In contrast, the previous work on variational inference instead compute $H(v)z$ where $v$ and $z$ both depend on $x$. 
This limits $H(v)$ in that it can not explicitly depend on $z$. 
While this difference is subtle, it means our proof of \Cref{thm:orth} does not hold for reflections as used in \citep{hhnf}.

\section{Conclusion}
In theory, we proved that a single auxiliary reflection is as expressive as any number of normal reflections. In practice, we demonstrated that a single auxiliary reflection can attain similar performance to many normal reflections when training Fully Connected Neural Networks, Recurrent Neural Networks and Normalizing Flows. 
For Fully Connected Neural Networks, we reduced training time by a factor between $2$ and $6$ by using auxiliary reflections instead of previous approaches to orthogonal matrices \citep{hhrnn,cayley,exp}. 

\bibliography{main}
\bibliographystyle{iclr2020_conference}

\clearpage 
\appendix
\section{Appendix}

\subsection{Proofs}

\subsubsection{\Cref{thm:orth}}
\label{app:lemma:ref}
Our proof of \Cref{thm:orth} is an follows \Cref{lemma:refl} which we state below. 

\thmorth*
\begin{proof}
Let $W=I-U$ then $H(Wx)x=H(x-Ux)x=Ux$ for all $x\in \mathbb{R}^d$ since $||Ux||=||x||$. 
\end{proof}

\begin{restatable}{lemma}{lemmarefl}\label{lemma:refl}
Let $||x||=||y||$ then $H(x-y)x=y$. 
\end{restatable}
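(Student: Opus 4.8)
The plan is to prove the lemma by direct computation, expanding the definition of the Householder matrix $H(v)$ with $v=x-y$ and exploiting the norm equality $\|x\|=\|y\|$ to collapse the resulting scalar coefficient to exactly $1$. Writing $H(x-y)x = x - 2(x-y)\,\tfrac{(x-y)^Tx}{\|x-y\|^2}$, the entire argument reduces to showing that the scalar $2(x-y)^Tx/\|x-y\|^2$ equals $1$, after which $H(x-y)x = x-(x-y) = y$ follows immediately.

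The two key algebraic identities I would establish first are $(x-y)^Tx = \|x\|^2 - x^Ty$ and $\|x-y\|^2 = \|x\|^2 - 2x^Ty + \|y\|^2$. Substituting $\|y\|^2 = \|x\|^2$ into the second identity gives $\|x-y\|^2 = 2(\|x\|^2 - x^Ty)$, which is exactly twice the numerator $(x-y)^Tx$. Hence the coefficient is $2\cdot\tfrac{\|x\|^2 - x^Ty}{2(\|x\|^2-x^Ty)} = 1$, and the claimed identity drops out in a single line.

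The only subtlety, and the main thing to watch, is the division by $\|x-y\|^2$, which requires $x\neq y$. When $\|x\|=\|y\|$, the denominator $\|x-y\|^2 = 2(\|x\|^2 - x^Ty)$ vanishes precisely when $x^Ty = \|x\|\,\|y\|$, i.e. (by the equality case of Cauchy--Schwarz together with $\|x\|=\|y\|$) exactly when $x=y$. In that degenerate case $v=0$ and $H(v)$ is literally undefined, so I would either restrict the statement to $x\neq y$ or adopt the convention $H(0)=I$, under which the claimed identity $H(0)x = x = y$ still holds trivially. This edge case is the only obstacle, and it is a genuinely minor one; the core of the proof is the coefficient simplification above.
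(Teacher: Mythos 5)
Your proof is correct and takes essentially the same route as the paper's: a direct expansion of $H(x-y)x$ in which the norm equality $\|x\|=\|y\|$ collapses the denominator to $2(\|x\|^2 - x^Ty)$ and hence the coefficient of $(x-y)$ to $1$; you merely organize the algebra as a scalar computation, whereas the paper expands the outer product $(x-y)(x-y)^T$ termwise before cancelling. Your handling of the degenerate case $x=y$ (where $\|x-y\|^2=0$ and $H(0)$ is undefined) is a point of care that the paper's proof silently skips.
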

\begin{proof}
The result is elementary, see, e.g., \citep{hhqr}. For completeness, we derive it below. 
\begin{align*}
    H(x-y)x&=x-2\frac{(x-y)(x-y)^T}{||x-y||^2}x\\
    &=x-2\frac{xx^T + yy^T - xy^T - yx^T}{x^Tx + y^Ty - 2x^Ty}x\\
    &=x-2\frac{xx^Tx + yy^Tx - xy^Tx - yx^Tx}{x^Tx + y^Ty - 2x^Ty}\\
    &=x-2\frac{x||x||+ yy^Tx - xy^Tx - y||x||}{2||x||^2- 2x^Ty}\\
    &=x-\frac{(x-y)||x||^2+ (y-x)(y^Tx) }{||x||^2- x^Ty}\\
    &=x-\frac{(x-y)||x||^2+ (y-x)(y^Tx) }{||x||^2- x^Ty}\\
    &=x-\frac{(x-y)(||x||^2- x^Ty) }{||x||^2- x^Ty}\\
    &=x-(x-y)= y 
\end{align*}

\end{proof}

\clearpage

\subsection{PyTorch Examples and Test Cases}
To ease the workload on reviewers, we opted to use small code snippets that can be copied into \href{www.colab.research.google.com}{www.colab.research.google.com} and run in a few seconds without installing any dependencies. 
Some PDF viewers do not copy line breaks, we found viewing the PDF in Google Chrome works. 

%
%
%
%

\subsubsection{Test Case: Inverse using Newtons Method} \label{app:inv}
Given $y$ we compute $x$ such that $H(Wx)x=y$ using Newtons method. 
To be concrete, the code below contains a toy example where $x\in \mathbb{R}^4$ and $W=I+VV^T/(2\cdot \sigma_{\max}(VV^T)) \in \mathbb{R}^{4\times 4}$. 
The particular choice of $W$ makes $H(Wx)x$ invertible, because $\lambda_i(W)=1+\lambda_i(VV^T)=1+\sigma_i(VV^T)\in [1, 3/2)$ because $VV^T$ is positive definite. 
Any possible way of choosing the eigenvalues in the range $[1, 3/2)$ guarantees that $3/2\cdot \lambda_{\min} > \lambda_{\max}$ which implies invertibility by \Cref{thm:tinv}. 
\begin{python}
import torch
print("torch version: ", torch.__version__)
torch.manual_seed(42)
d = 4
# Create random test-case.
I = torch.eye(d)
V = torch.zeros((d, d)).uniform_()
x = torch.zeros((d, 1)).uniform_()
W = I + V @ V.T / torch.svd(V @ V.T)[1].max()
# Define the function f(x)=H(Wx)x.
def H(v): return torch.eye(d) - 2 * v @ v.T / (v.T @ v)
def f(x): return H(W @ x ) @ x
# Print input and output
print("x\t\t", x.data.view(-1).numpy())
print("f(x)\t", f(x).data.view(-1).numpy())
print("")

# Use Newtons Method to compute inverse.
y  = f(x)
xi = y
for i in range(10):
  print("[
  # Compute Jacobian using Theorem 3.
  A = torch.eye(d) - 2* (xi.T @ W.T @ xi) / torch.norm(W @ xi)**2 * W
  J = -2*W @ xi @ xi.T @ W/torch.norm(W@xi)**2 + H(W @ xi) @ A
  xi = xi - torch.inverse(J) @ (f(xi)- y)
assert torch.allclose(xi, x, atol=10**(-7))
print("The two vectors are torch.allclose")
\end{python}

\begin{python}
torch version:  1.6.0+cu101
x		 [0.8854429  0.57390445 0.26658005 0.62744915]
f(x)	 [-0.77197534 -0.49936318 -0.5985155  -0.6120473 ]

[01/10] [-0.77197534 -0.49936318 -0.5985155  -0.6120473 ]
[02/10] [ 0.72816867  0.78074205 -0.02241153  1.0435152 ]
[03/10] [0.7348436  0.6478982  0.14960966 0.8003925 ]
[04/10] [0.8262452 0.6155189 0.2279686 0.6997254]
[05/10] [0.8765415 0.5831212 0.2592551 0.640691 ]
[06/10] [0.8852093  0.5742159  0.26631045 0.6278922 ]
[07/10] [0.88543946 0.5739097  0.26658094 0.62744874]
[08/10] [0.88544315 0.57390547 0.2665805  0.6274475 ]
[09/10] [0.885443   0.57390594 0.26658088 0.6274466 ]
[10/10] [0.8854408  0.57390743 0.2665809  0.6274484 ]
The two vectors are torch.allclose
\end{python}
\clearpage 
\paragraph{\Cref{fig:inv}. } 
\Cref{fig:inv} contains reconstructions $n^{-1}(n(x))$ of the variant of Glow \citep{glow} used in \Cref{exp:nf}. 
The Glow variant has 1x1 convolutions with auxiliary reflections, i.e., 
for an input $x\in\mathbb{R}^{c \times h \times w}$ where $(c,h,w)$ are (channels, heigh, width) it computes $z_{:,i,j}=H(Wx_{:,i,j})x_{:,i,j}\in\mathbb{R}^c$ where $i=1,...,h$ and $j=1,...,w$. 
Computing the inverse required computing the inverse of the auxiliary 1x1 convolutions, i.e., compute $x_{:,i,j}$ given $W$ and $z_{:, i, j}\; \forall i,j$. 
The weights were initialized as done in the above toy example.

\subsubsection{Test Case: Jacobian and Autograd}
\label{app:autograd}

\begin{python}
import torch
print("torch version: ", torch.__version__)
torch.manual_seed(42)

# Create random test-case. 
d = 4
W = torch.zeros((d, d)).uniform_(-1, 1) 
x = torch.zeros((d, 1)).uniform_(-1, 1)
I = torch.eye(d)

# Compute Jacobian using autograd. 
def H(v): return I - 2 * v @ v.T / (v.T @ v)
def f(x): return H(W @ x ) @ x
J     = torch.autograd.functional.jacobian(f, x)[:, 0, :, 0]
print(J)

# Compute Jacobian using Lemma 4. 
A  = I - 2* (x.T @ W.T @ x) / torch.norm(W @ x)**2 * W 
J_ = H(W @ x) @ A -2*W @ x @ x.T @ W/torch.norm(W@x)**2 
print(J_)
    
# Test the two matrices are close. 
assert torch.allclose(J, J_, atol=10**(-5))
print("The two matrices are torch.allclose")
\end{python}

\begin{python}
torch version:  1.6.0+cu101
tensor([[ 0.2011, -1.4628,  0.7696, -0.5376],
        [ 0.3125,  0.6518,  0.7197, -0.5997],
        [-1.0764,  0.8388,  0.0020, -0.1107],
        [-0.8789, -0.3006, -0.4591,  1.3701]])
tensor([[ 0.2011, -1.4628,  0.7696, -0.5376],
        [ 0.3125,  0.6518,  0.7197, -0.5997],
        [-1.0764,  0.8388,  0.0020, -0.1107],
        [-0.8789, -0.3006, -0.4591,  1.3701]])
The two matrices are torch.allclose
\end{python}

\clearpage

\section{Experimental details}\label{app:details}
In this section, we specify the details of the three experiments presented in the \Cref{sec:exp}.
The experiments were run on a single NVIDIA RTX 2080 Ti GPU and Intel Xeon Silver 4214 CPU @ 2.20GHz. 

\subsection{Fully Connected Neural Networks}\label{app:fcn}
For the experiment in \Cref{exp:fcn} we trained four Fully Connected Neural Networks (FCNNs) as MNIST classifiers. 
All FCNNs had the same structure which we now explain.
Inspired by \citep{svdnn} the layers of the FCNNs were parametrized in their Singular Value Decomposition (SVD). 
This just means each layer consisted of two orthogonal matrices $U,V\in\mathbb{R}^{784\times 784}$ and a diagonal matrix $\Sigma\in\mathbb{R}^{784\times 784}$, so the forward pass computes $y=U\Sigma V^Tx$. 
The FCNNs had three such fully connected layers with relu non-linearity in between, and a final linear layer of shape $W\in\mathbb{R}^{784\times 10}$. 
We used the Adam optimizer \citep{adam} with default parameters\footnote{Default parameters of the Adam implementation in PyTorch 1.6 \citep{pytorch}. }  to minimize cross entropy. 
To speed up the network with $784$ normal reflections, we used the FastH algorithm from \citep{fasth}. 
For the network with auxiliary reflections, we had $U,V$ be auxiliary reflections instead of orthogonal matrices. 
In all experiments, we initialized the singular values $\Sigma_{ij}\sim U(0.99, 1.01)$. 

We used orthogonal matrices with reflections, the Cayley transform and the matrix exponential as done in \citep{hhrnn,exp,cayley}, respectively. 
The orthogonal matrices are constructed using a weight matrix $W$. 
In all cases, we initialized $W_{ij}\sim U(-\frac{1}{\sqrt{d}}, \frac{1}{\sqrt{d}})$. 
It is possible one could initialize $W_{ij}$ in a smarter way, which could change the validation error reported in \Cref{fig:fcn}. 
That said, we did try to initialize $W$ using the \emph{Cayley initialization} suggested by \citep{exp}. However, we did not find it improved performance. 

\subsection{Recurrent Neural Networks}\label{app:rnn}
For the experiment in \Cref{exp:rnn}, we trained three Recurrent Neural Networks as MNIST classifiers as done in \citep{urnn,hhrnn,svdnn,exp}. 
We used the open-source implementation from \citep{exp}.\footnote{\href{https://github.com/Lezcano/expRNN/}{https://github.com/Lezcano/expRNN/}}
They use a clever type of ``Cayley initialization'' to initialize the transition matrix $U$. 
We found it worked very well, so we choose to initialize both the normal and auxiliary reflections so they initially represented the same transition matrix $U$. 
For normal reflections, this can be done by computing $v_1,...,v_d$ so $H_1\cdots H_d=U$ by using the QR decomposition. 
For the auxiliary reflection, this can be done using $W=I-U$ so $H(Wx)x=Ux$ (see \Cref{thm:inv}). 

In \citep{exp}, they use $h_0=0$ as initial state and report ``\emph{We choose as the initial vector $h_0=0$ for simplicity, as we did not observe any empirical improvement when using the initialization given in \citep{urnn}.}'' 
We sometimes encountered division by zero with auxiliary reflections when $h_0=0$, so we used the initialization suggested by \citep{urnn} in all experiments. 

The open-source implementation \citep{exp} use RMSProp \citep{rmsprop} with different learning rates for the transition matrix and the remaining weights. 
This was implemented in PyTorch by using two RMSProp optimizers. 
We found training auxiliary reflectons to be more stable with Adam \citep{adam}. 
We believe this happens because the ``averaged gradients'' $v$ become very small due to the normalization term $||Wx||^2$ in $H(Wx)x=x-2Wxx^TW^Tx/||Wx||^2$. 
When $v$ becomes small the scaling $1/(\sqrt{v}+\epsilon)$ of RMSProp becomes very large. 
We suspect the $1/(\sqrt{v/(1-\beta_2^T)}+\epsilon)$ scaling used by Adam fixed the issue, which caused more stable training with Adam. 
This caused us to use Adam optimizer for the transition matrix instead of RMSProp for all the RNNs we trained. 

\clearpage 
\subsection{Normalizing Flow}
\label{app:glow}
For the experiment in \Cref{exp:nf}, we trained three Normalizing Flows as generative models on CIFAR10 as done in \citep{nice,realnvp,glow,flowpp}. 
We used an open-source PyTorch implementation of Glow \citep{glow}\footnote{\href{https://github.com/chrischute/glow}{https://github.com/chrischute/glow}} with default parameters, 
except for the number of channels ``-C'' and the number of steps ``-K.'' 
In particular, to decrease training time, we reduced ``-C'' from $512$ to $64$ and ``-K'' from $32$ to $8$. 
This caused an increase in validation NLL (worse performance) from $3.49$ to $3.66$ after $80$ epochs.

\paragraph{Auxiliary Reflections for 1x1 Convolutions. }
\citep{glow} suggested using invertible $1\times 1$ convolutions for Normalizing Flows. 
That is, given an input $x\in \mathbb{R}^{c\times h\times w }$ and kernel $W\in \mathbb{R}^{c \times c}$ they compute $z_{:,i,j}=Wx_{:,i,j}$ for all $i,j$. 
The resulting function is invertible if $W$ is, and it has Jacobian determinant $hw\det(W)$. 
It was suggested by \citep{emerging} to represent $W$ in its QR decomposition so $\det(W)=\det(QR)=\det(Q)\det(R)=\det(R)=\prod_iR_{ii}$. 
To this end, they represent the orthogonal matrix $Q$ as a product of reflections, in particular, they use $c=12,24,48$ reflections at different places in the network. 
The main goal of this experiment, was to compare $c=12,24,48$ normal reflections against a single auxiliary reflection, which computes $z_{:,i,j}=H(Wx_{:,i,j})x_{:,i,j}$ instead of $z_{:,i,j}=Wx_{:,i,j}$.
To isolate the difference in performance due to reflections, we further removed the rectangular matrix. 

\paragraph{Provably Invertible. }One of the Normalizing Flows with auxiliary reflections had the weights of its auxiliary reflections constrained to ensure invertibility. 
In particular, we let each weight matrix be $W=I+VV^T$ and used spectral normalization $VV^T/(2\sigma_{\max}(VV^T))$ to ensure $\sigma_i(VV^T)< 1/2$. 
The largest singular value can be computed efficiently using power iteration \citep{snorm}. 
For ease of implementation, we circumvented using power iteration due to a known open issue in the official PyTorch implementation. 
We instead used \textsc{torch.symeig} to compute the largest singular value by computing the largest eigenvalue $\lambda_{\max}(VV^T)=\sigma_{\max}(VV^T)$, which holds because $VV^T$ is positive definite for $V\in \mathbb{R}^{c\times c}$. 
This was only possible because the matrices where at most $48\times 48$, for larger problems one would be forced to use the power iteration.

\paragraph{Initialization. }The open-source implementation of Glow initializes $W=Q$ with $Q$ from \textsc{torch.qr(torch.randn((c,c)))[0]}. 
For the case with normal reflections, we computed $v_1,...,v_c$ such that $H(v_1)\cdots H(v_c)=Q$ \citep{hhqr}. 
For the auxiliary reflection without constraints we let $W=I-Q$ such that $H(Wx)x=H(x-Qx)=Qx$ by \Cref{lemma:refl}. 

However, for the experiment with constraints on $W$, we could not initiallize $W=I-Q$ and instead used $W=I+VV^T$ where (initially) $V_{ij}\sim U(-\frac{1}{\sqrt{c}}, \frac{1}{\sqrt{c}})$. 
This increased error at initialization from $6.5$ to $8$. 
We suspect this happens because the previous initialization of $W$ has complex eigenvalues which $W=I+VV^T$ does not (because it is symmetric). 
In practice, we mitigate the poor initialization by using an additional fixed matrix $Q=Q^T$ which does not change during training.
This is essentially the same as using a fixed permutation as done in \citep{realnvp}, but, instead of using a fixed permutation, we use a fixed orthogonal matrix. 
While using the fixed $Q$, we found simply initializing $V=I$ worked sufficiently well. 

\end{document}